\documentclass{article}

\usepackage{microtype}
\usepackage{graphicx}
\usepackage{subfigure}
\usepackage{booktabs} 
\usepackage[scaled=0.85]{beramono}
\usepackage{amsmath}
\usepackage{amssymb,amsthm,bbm,graphicx,placeins,enumitem,dblfloatfix}
\usepackage{thm-restate}
\usepackage{pdfpages}

\DeclareMathOperator*{\argmax}{arg\,max}
\DeclareMathOperator*{\argmin}{arg\,min}
\newcommand{\<}{\mspace{1.5mu}}

\def\Rmax{R_{\text{\textnormal{max}}}}

\def\Vmax{V_{\text{\textnormal{max}}}}

\def\num{\rho_1}
\def\nuv{\rho_0}
\def\sm{s}
\def\sv{s_0}

\def\Mv{M_0}

\newtheorem{theorem}{Theorem}

\newtheorem{lemma}{Lemma}

\newtheorem{assumption}{Assumption}

\theoremstyle{definition}

\usepackage{hyperref}



\usepackage[accepted]{icml2018}

\icmltitlerunning{Feedback-Based Tree Search for Reinforcement Learning}

\begin{document}

\twocolumn[
\icmltitle{Feedback-Based Tree Search for Reinforcement Learning}




\begin{icmlauthorlist}
\icmlauthor{Daniel R. Jiang}{pitt}
\icmlauthor{Emmanuel Ekwedike}{tc,pton}
\icmlauthor{Han Liu}{tc,nw}
\end{icmlauthorlist}

\icmlaffiliation{pitt}{University of Pittsburgh}
\icmlaffiliation{tc}{Tencent AI Lab}
\icmlaffiliation{pton}{Princeton University}
\icmlaffiliation{nw}{Northwestern University}

\icmlcorrespondingauthor{Daniel R. Jiang}{drjiang@pitt.edu}

\icmlkeywords{MCTS, reinforcement learning, tree search}

\vskip 0.3in
]



\printAffiliationsAndNotice{}  

\begin{abstract}
Inspired by recent successes of Monte-Carlo tree search (MCTS) in a number of artificial intelligence (AI) application domains, we propose a model-based reinforcement learning (RL) technique that iteratively applies MCTS on batches of small, finite-horizon versions of the original infinite-horizon Markov decision process. The terminal condition of the finite-horizon problems, or the \emph{leaf-node evaluator} of the decision tree generated by MCTS, is specified using a combination of an estimated value function and an estimated policy function. The recommendations generated by the MCTS procedure are then provided as feedback in order to refine, through classification and regression, the leaf-node evaluator for the next iteration. We provide the first sample complexity bounds for a tree search-based RL algorithm. In addition, we show that a deep neural network implementation of the technique can create a competitive AI agent for the popular multi-player online battle arena (MOBA) game \textit{King of Glory}. 
\end{abstract}
\section{Introduction}
Monte-Carlo tree search (MCTS), introduced in \citet{Coulom2006} and surveyed in detail by \citet{Browne2012}, has received attention in recent years for its successes in gameplay artificial intelligence (AI), culminating in the Go-playing AI AlphaGo \citep{Silver2016}. MCTS seeks to iteratively build the decision tree associated with a given Markov decision process (MDP) so that attention is focused on ``important'' areas of the state space, assuming a given initial state (or \textit{root node} of the decision tree). The intuition behind MCTS is that if rough estimates of state or action values are given, then it is only necessary to expand the decision tree in the direction of states and actions with high estimated value. To accomplish this, MCTS utilizes the guidance of \emph{leaf-node evaluators} (either a policy function \cite{Chaslot2006} rollout, a value function evaluation \cite{Campbell2002,Enzenberger2003}, or a mixture of both \citep{Silver2016}) to produce estimates of downstream values once the tree has reached a certain depth \citep{Browne2012}. The information from the leaf-nodes are then \emph{backpropagated} up the tree. The performance of MCTS depends heavily on the quality of the policy/value approximations \citep{Gelly2007}, and at the same time, the successes of MCTS in Go show that MCTS \emph{improves upon a given policy} when the policy is used for leaf evaluation, and in fact, it can be viewed as a policy improvement operator \citep{silver2017mastering}. In this paper, we study a new feedback-based framework, wherein MCTS updates its own leaf-node evaluators using observations generated at the root node.

MCTS is typically viewed as an \emph{online planner}, where a decision tree is built starting from the current state as the root node  \citep{Chaslot2006,Chaslot2008,Hingston2007,Maitrepierre2008,Cazenave2009,Mehat2010,Gelly2011,Gelly2012,Silver2016}. The standard goal of MCTS is to recommend an action for the \emph{root node only}. After the action is taken, the system moves forward and a new tree is created from the next state (statistics from the old tree may be partially saved or completely discarded). MCTS is thus a ``local'' procedure (in that it only returns an action for a given state) and is inherently different from value function approximation or policy function approximation approaches where a ``global'' policy (one that contains policy information about all states) is built. In real-time decision-making applications, it is more difficult to build an adequate ``on-the-fly'' local approximation than it is to use pre-trained global policy in the short amount of time available for decision-making. For games like Chess or Go, online planning using MCTS may be appropriate, but in games where fast decisions are necessary (e.g., Atari or MOBA video games), tree search methods are too slow \citep{Guo2014}. The proposed algorithm is intended to be used in an \emph{off-policy} fashion during the reinforcement learning (RL) \emph{training phase}. Once the training is complete, the policies associated with leaf-node evaluation can be implemented to make fast, real-time decisions without any further need for tree search.

\noindent \textbf{Main Contributions.} These characteristics of MCTS motivate our proposed method, which attempts to leverage the \emph{local} properties of MCTS into a training procedure to iteratively build \emph{global} policy across all states. The idea is to apply MCTS on batches of small, finite-horizon versions of the original infinite-horizon Markov decision process (MDP). A rough summary is as follows: (1) initialize an arbitrary value function and a policy function; (2) start (possibly in parallel) a batch of MCTS instances, limited in search-depth, initialized from a set of sampled states, while incorporating a combination of the value and policy function as leaf-node evaluators; (3) update both the value and policy functions using the latest MCTS root node observations; (4) Repeat starting from step (2). This method exploits the idea that an MCTS policy is better than either of the leaf-node evaluator policies alone \citep{Silver2016}, yet improved leaf-node evaluators also improve the quality of MCTS \citep{Gelly2007}. The primary contributions of this paper are summarized below.
\begin{enumerate}
\item We propose a batch, MCTS-based RL method that operates on continuous state, finite action MDPs and exploits the idea that leaf-evaluators can be updated to produce a stronger tree search using \emph{previous tree search results}. Function approximators are used to track policy and value function approximations, where the latter is used to reduce the length of the tree search rollout (oftentimes, the rollout of the policy becomes a computational bottle-neck in complex environments).
\item We provide a full sample complexity analysis of the method and show that with large enough sample sizes and sufficiently large tree search effort, the performance of the estimated policies can be made close to optimal, up to some unavoidable approximation error. To our knowledge, batch MCTS-based RL methods have not been theoretically analyzed.
\item An implementation of the feedback-based tree search algorithm using deep neural networks is tested on the recently popular MOBA game $\mathtt{King\; of\; Glory}$ (a North American version of the same game is titled $\mathtt{Arena \; of \; Valor}$). The result is a competitive AI agent for the 1v1 mode of the game.
\end{enumerate}

\section{Related Work} The idea of leveraging tree search during training was first explored by \citet{Guo2014} in the context of Atari games, where MCTS was used to generate offline training data for a supervised learning (classification) procedure. The authors showed that by using the power of tree search offline, the resulting policy was able to outperform the deep $Q$-network (DQN) approach of \cite{Mnih2013}. A natural next step is to repeatedly apply the procedure of \citet{Guo2014}. In building AlphaGo Zero, \citet{silver2017mastering} extends the ideas of \citet{Guo2014} into an iterative procedure, where the neural network policy is updated after every episode and then reincorporated into tree search. The technique was able to produce a superhuman Go-playing AI (and improves upon the previous AlphaGo versions) without any human replay data. 

Our proposed algorithm is a \emph{provably near-optimal} variant (and in some respects, generalization) of the AlphaGo Zero algorithm. The key differences are the following: (1) our theoretical results cover a continuous, rather than finite, state space setting, (2) the environment is a stochastic MDP rather than a sequential deterministic two player game, (3) we use batch updates, (4) the feedback of previous results to the leaf-evaluator manifests as both policy and value updates rather than just the value (as \citet{silver2017mastering} does not use policy rollouts). 

\citet{anthony2017thinking} proposes a general framework called \emph{expert iteration} that combines supervised learning with tree search-based planning. The methods described in \citet{Guo2014}, \citet{silver2017mastering}, and the current paper can all be (at least loosely) expressed under the expert iteration framework. However, no theoretical insights were given in any of these previous works and our paper intends to fill this gap by providing a full theoretical analysis of an iterative, MCTS-based RL algorithm. Our analysis relies on the \emph{concentrability coefficient} idea of \citet{munos2007performance} for approximate value iteration and builds upon the work on classification based policy iteration \citep{lazaric2016analysis}, approximate modified policy iteration \citep{scherrer2015approximate}, and fitted value iteration \citep{munos2008finite}.

Sample complexity results for MCTS are relatively sparse. \citet{teraoka2014efficient} gives a high probability upper bound on the number of playouts needed to achieve $\epsilon$-accuracy at the root node for a stylized version of MCTS called $\mathtt{FindTopWinner}$. More recently, \citet{kaufmann2017monte} provided high probability bounds on the sample complexity of two other variants of MCTS called $\mathtt{UGapE}$-$\mathtt{MCTS}$ and $\mathtt{LUCB}$-$\mathtt{MCTS}$. In this paper, we do not require any particular implementation of MCTS, but make a generic assumption on its accuracy that is inspired by these results.

\section{Problem Formulation}
Consider a discounted, infinite-horizon MDP with a continuous state space $\mathcal S$ and finite action space $\mathcal A$. For all $(s,a) \in \mathcal S \times \mathcal A$, the \emph{reward function} $r: \mathcal S \times \mathcal A \rightarrow \mathbb R$ satisfies $r(s, a) \in [0,\Rmax]$. The \emph{transition kernel}, which describes transitions to the next state given current state $s$ and action $a$, is written $p(\<\cdot\<|\<s,a)$ --- a probability measure over $\mathcal S$. Given a \emph{discount factor} $\gamma \in [0,1)$, the value function $V^\pi$ of a \emph{policy} $\pi : \mathcal S \rightarrow \mathcal A$ starting in $s = s_0 \in \mathcal S$ is given by
\begin{equation}
V^\pi(s) = \mathbf{E} \left[ \sum_{t = 0}^\infty \gamma^t \, r(s_t, \pi_t(s_t))\right],
\label{eq:discountedpi}
\end{equation}
where $s_t$ is the state visited at time $t$. Let $\Pi$ be the set of all stationary, deterministic policies (i.e., mappings from state to action). The \emph{optimal value function} is obtained by maximizing over all policies: $V^*(s) = \sup_{\pi \in \Pi} V^\pi(s)$.

Both $V^\pi$ and $V^*$ are bounded by $V_\text{max} = \Rmax/(1-\gamma)$. We let $\mathcal F$ be the set of bounded, real-valued functions mapping $\mathcal S$ to $[0,V_\text{max}]$. We frequently make use of the shorthand operator $T_\pi : \mathcal F \to \mathcal F$, where the quantity $(T_\pi V)(s)$ is be interpreted as the reward gained by taking an action according to $\pi$, receiving the reward $r(s,\pi(s))$, and then receiving an expected terminal reward according to the argument $V$:
\begin{align*}
(T_\pi V)(s) = r(s,\pi(s)) + \gamma \int_{\mathcal S}  V(\tilde{s}) \,p(d\tilde{s}\<|\<s,\pi(s)).
\end{align*}
It is well-known that $V^\pi$ is the unique fixed-point of $T_\pi$, meaning $T_\pi V^\pi = V^\pi$ \citep{Puterman}. The \emph{Bellman operator} $T:\mathcal F \rightarrow \mathcal F$ is similarly defined using the maximizing action:
\[
(T V)(s) = \max_{a \in \mathcal A}\,\Bigl[ r(s,a) + \gamma \int_{\mathcal S} V(\tilde{s}) \,p(d\tilde{s}\<|\<s,a)  \Bigr].
\]
It is also known that $V^*$ is the unique fixed-point of $T$ \citep{Puterman} and that acting greedily with respect to the optimal value function $V^*$ produces an \emph{optimal policy}:
\[
\pi^*(s) \in \argmax_{a \in A}  \,\Bigl[r(s,a) + \gamma \int_{\mathcal S} V^*(\tilde{s}) \, p(d\tilde{s}\<|\<s,a)  \Bigr].
\]
We use the notation $T^d$ to mean the $d$ compositions of the mapping $T$, e.g., $T^2V = T(TV)$.   Lastly, let $V \in \mathcal F$ and let $\nu$ be a distribution over $\mathcal S$. We define left and right versions of an operator $P_\pi$:
\begin{align*}
(P_\pi V)(s) &= \int_{\mathcal S} V(\tilde{s}) \, p(d\tilde{s}\< | \< s, \pi(s)),\\
(\nu P_\pi)(d\tilde{s}) &= \int_{\mathcal S} p(d\tilde{s}\< | \< s, \pi(s)) \, \nu(ds).
\end{align*}
Note that $P_\pi V \in \mathcal F$ and $\mu P_\pi$ is another distribution over $\mathcal S$.

\section{Feedback-Based Tree Search Algorithm}
We now formally describe the proposed algorithm. The parameters are as follows. Let $\bar{\Pi} \subseteq \Pi$ be a space of approximate policies and $\bar{\mathcal F} \subseteq \mathcal F$ be a space of approximate value functions (e.g., classes of neural network architectures). We let $\pi_k \in \bar{\Pi}$ be the policy function approximation (PFA) and $V_k \in \bar{\mathcal F}$ be the value function approximation (VFA) at iteration $k$ of the algorithm. Parameters subscripted with `0' are used in the value function approximation (regression) phase and parameters subscripted with `1' are used in the tree search phase.
The full description of the procedure is given in Figure \ref{alg:pd}, using the notation $T_a = T_{\pi_a}$, where $\pi_a$ maps all states to the action $a \in \mathcal A$. We now summarize the two phases, VFA (Steps 2 and 3) and MCTS (Steps 4, 5, and 6).

\noindent \textbf{VFA Phase.} Given a policy $\pi_{k}$, we wish to approximate its value by fitting a function using subroutine $\mathtt{Regress}$ on $N_0$ states sampled from a distribution $\rho_0$. Each call to $\mathtt{MCTS}$ requires repeatedly performing rollouts that are initiated from leaf-nodes of the decision tree. Because repeating full rollouts during tree search is expensive, the idea is that a VFA obtained from a one-time regression on a single set of rollouts can drastically reduce the computation needed for $\mathtt{MCTS}$. For each sampled state $s$, we estimate its value using $M_0$ full rollouts, which can be obtained using the absorption time formulation of an infinite horizon MDP \citep[Proposition 5.3.1]{Puterman}.

\noindent \textbf{MCTS Phase.} On every iteration $k$, we sample a set of $N_1$ i.i.d. states from a distribution $\rho_1$ over $\mathcal S$. From each state, a tree search algorithm, denoted $\mathtt{MCTS}$, is executed for $M_1$ iterations on a search tree of maximum depth $d$. We assume here that the leaf evaluator is a general function of the PFA and VFA from the previous iteration, $\pi_{k}$ and $V_{k}$, and it is denoted as a ``subroutine'' $\mathtt{LeafEval}$. The results of the $\mathtt{MCTS}$ procedure are piped into a subroutine $\mathtt{Classify}$, which fits a new policy $\pi_{k+1}$ using classification (from continuous states to discrete actions) on the new data. As discussed more in Assumption \ref{as:class}, $\mathtt{Classify}$ uses $L_1$ observations (one-step rollouts) to compute a loss function.
\begin{figure}[h]
\centering{\framebox[1.078\width]{\parbox{3in}{
\begin{enumerate}[wide, labelwidth=!, labelindent=0pt, itemsep=1pt]
\small
\item Sample a set of $N_0$ i.i.d. states $\mathcal S_{0,\<k}$ from $\rho_0$ and $N_1$ i.i.d. states $\mathcal S_{1,k}$ from $\rho_1$.
\item Compute a sample average $\hat{Y}_{k}(s)$ of $M_0$ independent rollouts of $\pi_{k}$ for each  $s \in {\mathcal S}_{0,k}$. See Assumption \ref{as:reg}.
\item Use $\mathtt{Regress}$ on the set $\{ \hat{Y}_{k}(s) : s \in  {\mathcal S}_{0,k} \}$ to obtain a value function $V_{k} \in \bar{\mathcal F}$. See Assumption \ref{as:reg}.
\item From each $s \in \mathcal S_{1,k}$, run $\mathtt{MCTS}$ with parameters $M_1$, $d$, and evaluator $\mathtt{LeafEval}$. Return estimated value of each $s$, denoted $\hat{U}_k(s)$. See Assumptions \ref{as:leafeval} and \ref{as:treesearch}.
\item For each $s \in \mathcal S_{1,k}$ and $a \in \mathcal A$, create estimate $\hat{Q}_k(s,a) \approx (T_a \< V_k)(s)$ by averaging $L_1$ transitions from $p(\<\cdot\<|\<s,a)$. See Assumption \ref{as:class}.
\item Use $\mathtt{Classify}$ to solve a cost-sensitive classification problem and obtain the next policy $\pi_{k+1} \in \bar{\Pi}$. Costs are measured using $\{ \hat{U}_k(s) : s \in {\mathcal S}_{1,k} \}$ and $\{ \hat{Q}_k(s,\pi_{k+1}(s)): s \in \mathcal S_{1,k} \}$. See Assumption \ref{as:class}. Increment $k$ and return to Step 1.
\end{enumerate}
}}}
\caption{Feedback-Based Tree Search Algorithm}
\label{alg:pd}
\end{figure}

\begin{figure}
\centering
	\includegraphics[scale=1.0]{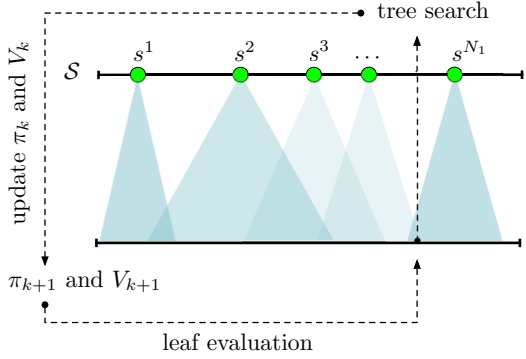}
	\vspace{-10pt}
	\caption{Illustration of the Feedback Loop}
	\label{fig:overview}
\end{figure}

The illustration given in Figure \ref{fig:overview} shows the interactions (and feedback loop) of the basic components of the algorithm: (1) a set of tree search runs initiated from a batch of sampled states (triangles), (2) leaf evaluation using $\pi_k$ and $V_k$ is used during tree search, and (3) updated PFA and VFA $\pi_{k+1}$ and $V_{k+1}$ using tree search results.

\section{Assumptions}

Figure \ref{alg:pd} shows the algorithm written with general subroutines $\mathtt{Regress}$, $\mathtt{MCTS}$, $\mathtt{LeafEval}$, and $\mathtt{Classify}$, allowing for variations in implementation suited for different problems. However, our analysis assumes specific choices and properties of these subroutines, which we describe now. The regression step solves a least absolute deviation problem to minimize an empirical version of
\[
\|f - V^{\pi_{k}}\|_{1,\,\nuv} = \int_{\mathcal S}|f(s) - V^{\pi_{k}}(s)| \< \rho_0(ds),
\]
as described in the first assumption.
\begin{assumption}[$\mathtt{Regress}$ Subroutine] 
\label{as:reg}
For each $s^i \in \mathcal S_{0,k}$, define $s^i = s_0^{ij}$ for all $j$ and for each $t$, the state $s_{t+1}^{ij}$ is drawn from $p( \, \cdot \< | \< s_t^{ij}, \pi_k(s_t^{ij}))$. Let $\hat{Y}_{k}(s^i)$ be an estimate of $V^{\pi_{k}}(s^i)$ using $\Mv$ rollouts and $V_{k}$, the VFA resulting from $\mathtt{Regress}$, obtained via least absolute deviation regression:
\begin{align}
\hat{Y}_{k}(\sv^i) = \frac{1}{\Mv} \sum_{j=1}^{\Mv} \sum_{t=0}^\infty \gamma^t \, r(s_t^{ij}, \pi_{k}(s_t^{ij}))\label{eq:Y},\\
V_{k} \in \argmin_{f \in \bar{\mathcal F}} \, \frac{1}{N_0} \sum_{i = 1}^{N_0} \< \bigl| f(s^i) - \hat{Y}_{k}(s^i) \bigr|\label{eq:V}.
\end{align}
\end{assumption}

There are many ways that $\mathtt{LeafEval}$ may be defined. The standard leaf evaluator for MCTS is to simulate a default or ``rollout'' policy \citep{Browne2012} until the end of the game, though in related tree search techniques, authors have also opted for a value function approximation \citep{Campbell2002,Enzenberger2003}. It is also possible to combine the two approximations: \citet{Silver2016} uses a weighted combination of a full rollout from a pre-trained policy and a pre-trained value function approximation. 

\begin{assumption}[$\mathtt{LeafEval}$ Subroutine]
\label{as:leafeval}
Our approach uses a partial rollout of length $h \ge0$ and a value estimation at the end. $\mathtt{LeafEval}$ produces unbiased observations of
\begin{equation}
J_{k}(s) = \mathbf{E} \left[ \sum_{t=0}^{h-1} \gamma^t \< r(\tilde{s}_t, \pi_{k}(\tilde{s}_t)) + \gamma^h \, V_{k}(\tilde{s}_h) \right],
\label{eq:u}
\end{equation}
where $\tilde{s}_0 = s$.
\end{assumption}

Assumption \ref{as:leafeval} is motivated by our MOBA game, on which we observed that even short rollouts (as opposed to simply using a VFA) are immensely helpful in determining local outcomes (e.g., dodging attacks, eliminating minions, health regeneration). At the same time, we found that numerous full rollouts simulated using the relatively slow and complex game engine is far too time-consuming within tree search.

We also need to make an assumption on the sample complexity of $\mathtt{MCTS}$, of which there are many possible variations \citep{Chaslot2006,Coulom2006,Kocsis2006,Gelly2007,couetoux2011continuous,couetoux2011continuous2,Al-Kanj2016,Jiang2017}. Particularly relevant to our continuous-state setting are tree expansion techniques called \emph{progressive widening} and \emph{double progressive widening}, proposed in \citet{couetoux2011continuous}, which have proven successful in problems with continuous state/action spaces. To our knowledge, analysis of the sample complexity is only available for stylized versions of MCTS on finite problems, like \citet{teraoka2014efficient} and \citet{kaufmann2017monte}. Theorems from these papers show upper bounds on the number of iterations needed so that with high probability (greater than $1-\delta$), the value at the root node is accurate within a tolerance of $\epsilon$. Fortunately, there are ways to discretize continuous state MDPs that enjoy error guarantees, such as \citet{bertsekas1975convergence}, \citet{dufour2012approximation}, or \citet{saldi2017asymptotic}. These error bounds can be combined with the MCTS guarantees of \citet{teraoka2014efficient} and \citet{kaufmann2017monte} to produce a sample complexity bound for MCTS on continuous problems. The next assumption captures the essence of these results (and if desired, can be made precise for specific implementations through the references above).

\begin{assumption}[$\mathtt{MCTS}$ Subroutine]
\label{as:treesearch}
Consider a $d$-stage, finite-horizon subproblem of (\ref{eq:discountedpi}) with terminal value function $J$ and initial state is $s$. Let the result of $\mathtt{MCTS}$ be denoted $\hat{U}(s)$. We assume that there exists a function $m(\epsilon,\delta)$, such that if $m(\epsilon,\delta)$ iterations of $\mathtt{MCTS}$ are used, the inequality $| \hat{U}(s) - (T^{\<d} \< J)(s) | \le \epsilon$ holds with probability at least $1- \delta$.
\end{assumption}
Now, we are ready to discuss the $\mathtt{Classify}$ subroutine. Our goal is to select a policy $\pi \in \bar{\Pi}$ that \emph{closely mimics} the performance of the $\mathtt{MCTS}$ result, similar to practical implementations in existing work \citep{Guo2014,silver2017mastering,anthony2017thinking}. The question is: given a candidate $\pi$, how do we measure ``closeness'' to the $\mathtt{MCTS}$ policy? We take inspiration from previous work in classification-based RL and use a cost-based penalization of classification errors \citep{langford2005relating,li2007focus,lazaric2016analysis}. Since $\hat{U}(s^i)$ is an approximation of the performance of the $\mathtt{MCTS}$  policy, we should try to select a policy $\pi$ with similar performance. To estimate the performance of some candidate policy $\pi$, we use a one-step rollout and evaluate the downstream cost using $V_k$.

\begin{assumption}[$\mathtt{Classify}$ Subroutine]
\label{as:class}
For each $s^i \in \mathcal S_{1,k}$ and $a \in \mathcal A$, let $\hat{Q}_k(\sm^i,a)$ be an estimate of the value of state-action pair $(s^i,a)$ using $L_1$ samples. 
\begin{align*}
\hat{Q}_k(\sm^i,a) =  \frac{1}{L_1}\sum_{j=1}^{L_1}\, \bigl[ r(\sm^i, a) + \gamma \< V_{k}(\tilde{s}^j(a))\bigr].
\end{align*}
Let $\pi_{k+1}$, the result of $\mathtt{Classify}$, be obtained by minimizing the discrepancy between the $\mathtt{MCTS}$ result $\hat{U}_k$ and the estimated value of the policy under approximations $\hat{Q}_k$:
\begin{align*}
\pi_{k+1} \in \argmin_{\pi \in \bar{\Pi}} \, \frac{1}{N_1} \sum_{i = 1}^{N_1} \, \bigl| \hat{U}_k(s^i) - \hat{Q}_k(s^i,\pi(\sm^i)) \bigr|,
\end{align*}
where $\tilde{s}^j(a)$ are i.i.d. samples from  $p( \< \cdot \, |\,\sm^i, a)$. 
\end{assumption}

An issue that arises during the analysis is that even though we can control the distribution from which states are sampled, this distribution is transformed by the transition kernel of the policies used for rollout/lookahead. Let us now introduce the \emph{concentrability coefficient} idea of \citet{munos2007performance} (and used subsequently by many authors, including \citet{munos2008finite}, \citet{lazaric2016analysis}, \citet{scherrer2015approximate}, and \citet{haskell2016empirical}).

\begin{assumption}[Concentrability]
Consider any sequence of $m$ policies $\mu_1, \mu_2, \ldots, \mu_m \in \Pi$. Suppose we start in distribution $\nu$ and that the state distribution attained after applying the $m$ policies in succession, $\nu \< P_{\mu_1} \< P_{\mu_2} \cdots P_{\mu_m}$, is absolutely continuous with respect to $\rho_1$. We define an $m$-step concentrability coefficient
\begin{align*}
A_m = \sup_{\mu_1, \< \ldots ,\< \mu_m}  \left \| \frac{d \nu P_{\mu_1} P_{\mu_2} \cdots P_{\mu_m}}{d\rho_1}  \right \|_{\infty},
\end{align*}
and assume that $\sum_{i,j=0}^\infty \gamma^{i+j} \< A_{i+j} < \infty$. Similarly, we assume $\num \< P_{\mu_1} \< P_{\mu_2} \cdots P_{\mu_m}$, is absolutely continuous with respect to $\nuv$ and assume that
\begin{align*}
A'_m &= \sup_{\mu_1, \< \ldots,\< \mu_m}  \left \| \frac{d \num P_{\mu_1} P_{\mu_2} \cdots P_{\mu_m}}{d\nuv}  \right \|_{\infty}
\end{align*}
is finite for any $m$.
\label{as:conc}
\end{assumption}
The concentrability coefficient describes how the state distribution changes after $m$ steps of arbitrary policies and how it relates to a given reference distribution. Assumptions \ref{as:reg}-\ref{as:conc} are used for the remainder of the paper.

\section{Sample Complexity Analysis}
Before presenting the sample complexity analysis, let us consider an algorithm that generates a sequence of policies $\{\pi_0, \pi_1, \pi_2, \ldots\}$ satisfying $T_{\pi_{k+1}} T^{d-1} \< V^{\pi_k} = T^d \< V^{\pi_k}$ with no error. It is proved in \citet[pp. 30-31]{Bertsekas1996} that $\pi_k \rightarrow \pi^*$ in the finite state and action setting. Our proposed algorithm in Figure \ref{alg:pd} can be viewed as \emph{approximately satisfying} this iteration in a continuous state space setting, where $\mathtt{MCTS}$ plays the role of $T^d$ and evaluation of $\pi_k$ uses a combination of accurate rollouts (due to $\mathtt{Classify}$) and fast VFA evaluations (due to $\mathtt{Regress}$). The sample complexity analysis requires the effects of all errors to be systematically analyzed.

For some $K \ge 0$, our goal is to develop a high probability upper bound on the \emph{expected suboptimality, over an initial state distribution} $\nu$, of the performance of policy $\pi_K$, written as $\|V^* - V^{\pi_K}\|_{1,\< \nu}$. Because there is no requirement to control errors with probability one, bounds in $\| \cdot\|_{1,\<\nu}$ tend to be much more useful in practice than ones in the traditional $\|\cdot \|_\infty$. 
Notice that:
\begin{equation}
\begin{aligned}
\frac{1}{N_1} \sum_{i = 1}^{N_1} \, \bigl| \hat{U}_k(s^i) &- \hat{Q}_k(s^i,\pi_{k+1}(s^i)) \bigr| \\
&\approx \bigl \|\<T^d \< V^{\pi_{k}} - T_{\pi_{k+1}} \< V^{\pi_{k}} \bigr \|_{1,\<\rho_1},
\end{aligned}
\label{eq:loss}
\end{equation}
where the left-hand-side is the loss function used in the classification step from Assumption \ref{as:class}. It turns out that we can relate the right-hand-side (albeit under a different distribution) to the expected suboptimality after $K$ iterations $\|V^* - V^{\pi_K}\|_{1,\< \nu}$, as shown in the following lemma. Full proofs of all results are given in the supplementary material.

\begin{restatable}[Loss to Performance Relationship]{lemma}{lemone}
The expected suboptimality of $\pi_K$ can be bounded as follows:
\begin{align*}
\|V^* - &V^{\pi_K}\|_{1,\< \nu} \le \; \gamma^{K\<d} \, \|V^* - V^{\pi_0}\|_{\infty} \\
&+ \sum_{k=1}^{K} \gamma^{(K-k)\<d} \, \bigl \|\<T^d \< V^{\pi_{k-1}} - T_{\pi_k} \< V^{\pi_{k-1}} \bigr \|_{1,\<\Lambda_{\nu,k}}
\end{align*}
where $\Lambda_{\nu,k} = \nu \, (P_{\pi^*})^{(K-k)\<d} \, \bigl[I - (\gamma P_{\pi_{k}})\bigr]^{-1}$.
\label{lem:one}
\end{restatable}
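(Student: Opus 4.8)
The plan is to reduce everything to a single pointwise, per-iteration recursion on the nonnegative loss $x_k := V^* - V^{\pi_k}$, unroll it $K$ times, and only then integrate against $\nu$. Throughout I would abbreviate $B_k := (I - \gamma P_{\pi_k})^{-1} = \sum_{j \ge 0}(\gamma P_{\pi_k})^j$, a positive (monotone) operator, and $\epsilon_k := T^d V^{\pi_{k-1}} - T_{\pi_k} V^{\pi_{k-1}}$, the quantity appearing in the loss. I would first record three elementary facts: (i) $\epsilon_k \ge 0$ and $T^d V^{\pi_{k-1}} \ge V^{\pi_{k-1}}$ pointwise, both following from $T V^{\pi_{k-1}} \ge T_{\pi_{k-1}} V^{\pi_{k-1}} = V^{\pi_{k-1}}$ and monotonicity of $T$; (ii) the optimality inequality $V^* - T^d V^{\pi_{k-1}} \le \gamma^d (P_{\pi^*})^d (V^* - V^{\pi_{k-1}})$, obtained by writing $V^* - T^d V = T^d V^* - T^d V$ and applying, at each of the $d$ stages, the one-step bound that uses $T V^* = T_{\pi^*} V^*$, the fixed-point relation $T^j V^* = V^*$, and $T V \ge T_{\pi^*} V$; and (iii) the value-difference identity $V^{\pi_k} - V^{\pi_{k-1}} = B_k (T_{\pi_k} V^{\pi_{k-1}} - V^{\pi_{k-1}})$, which follows from $V^{\pi_k} = B_k\, r_{\pi_k}$ (where $r_{\pi_k}(s) = r(s,\pi_k(s))$) and $T_{\pi_k} V^{\pi_{k-1}} - V^{\pi_{k-1}} = r_{\pi_k} - (I - \gamma P_{\pi_k}) V^{\pi_{k-1}}$.

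The heart of the argument is shaping the per-iteration bound correctly. I would split
\[
x_k = \bigl(V^* - T^d V^{\pi_{k-1}}\bigr) + \bigl(T^d V^{\pi_{k-1}} - V^{\pi_k}\bigr).
\]
Fact (ii) bounds the first bracket by $\gamma^d (P_{\pi^*})^d x_{k-1}$. For the second bracket I would substitute the identity (iii): writing $u := T^d V^{\pi_{k-1}} - V^{\pi_{k-1}} \ge 0$ and noting $T_{\pi_k} V^{\pi_{k-1}} - V^{\pi_{k-1}} = u - \epsilon_k$, the second bracket equals $u - B_k(u - \epsilon_k) = (I - B_k) u + B_k \epsilon_k$. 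Since $I - B_k = -\gamma P_{\pi_k} B_k$ maps nonnegative functions to nonpositive ones and $u \ge 0$, the term $(I - B_k)u \le 0$ may be discarded, leaving $T^d V^{\pi_{k-1}} - V^{\pi_k} \le B_k \epsilon_k$. Combining the two brackets yields the clean recursion
\[
x_k \;\le\; \gamma^d (P_{\pi^*})^d\, x_{k-1} + B_k\, \epsilon_k,
\]
with $(P_{\pi^*})^d$ on the \emph{outside} and $B_k$ acting only on the nonnegative error.

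Finally, since $\gamma^d (P_{\pi^*})^d$ is monotone, I would unroll this recursion from $k=K$ down to $k=0$ to obtain, pointwise,
\[
x_K \le \gamma^{Kd} (P_{\pi^*})^{Kd} x_0 + \sum_{k=1}^{K} \gamma^{(K-k)d} (P_{\pi^*})^{(K-k)d} B_k\, \epsilon_k.
\]
Integrating against $\nu$ and using $x_K \ge 0$ turns the left side into $\|V^* - V^{\pi_K}\|_{1,\nu}$. For the leading term, $\nu (P_{\pi^*})^{Kd}$ is a probability measure and $x_0 \ge 0$, so it is at most $\gamma^{Kd}\|V^* - V^{\pi_0}\|_\infty$. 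For each error term, the left-multiplication $\nu (P_{\pi^*})^{(K-k)d} B_k$ is exactly the finite measure $\Lambda_{\nu,k}$, and since $\epsilon_k \ge 0$ the integral $\int \epsilon_k\, d\Lambda_{\nu,k}$ equals $\|T^d V^{\pi_{k-1}} - T_{\pi_k} V^{\pi_{k-1}}\|_{1,\Lambda_{\nu,k}}$, which gives the claimed bound.

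I expect the main obstacle to be precisely the shaping of the per-iteration recursion. A naive route (expanding $V^* - T_{\pi_k} V^{\pi_k}$ directly and inverting $I - \gamma P_{\pi_k}$) produces the factor $B_k$ on \emph{both} the propagation term and the error; the nested operators $B_j$ that then accumulate during unrolling do not collapse into the single $(I-\gamma P_{\pi_k})^{-1}$ that appears in $\Lambda_{\nu,k}$, and $B_j \ne I$ cannot simply be dropped since it carries mass $1/(1-\gamma)$. The specific split above, together with discarding the nonnegative term $(I - B_k)u$, is exactly what keeps the propagation operator equal to $(P_{\pi^*})^d$; everything else is routine monotonicity and sign bookkeeping, relying on the nonnegativity of $x_k$, $\epsilon_k$, $u$, and the measures involved.
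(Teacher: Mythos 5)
Your proof is correct: the per-iteration recursion $V^* - V^{\pi_k} \le \gamma^d (P_{\pi^*})^d (V^* - V^{\pi_{k-1}}) + (I-\gamma P_{\pi_k})^{-1}\bigl(T^d V^{\pi_{k-1}} - T_{\pi_k} V^{\pi_{k-1}}\bigr)$, obtained by splitting off $V^* - T^d V^{\pi_{k-1}}$ and controlling $T^d V^{\pi_{k-1}} - V^{\pi_k}$ via the value-difference identity and monotonicity, followed by unrolling and integrating against $\nu$, is exactly the argument underlying the paper's proof --- indeed the form of $\Lambda_{\nu,k} = \nu\,(P_{\pi^*})^{(K-k)d}\,[I-\gamma P_{\pi_k}]^{-1}$ is precisely the composition of propagation operators your recursion produces. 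Your sign bookkeeping (nonnegativity of $\epsilon_k$, $u$, and discarding $(I-B_k)u \le 0$) is sound, so this matches the paper's approach in all essentials.
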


\begin{figure*}[h]
\centering
	\includegraphics[scale=0.95]{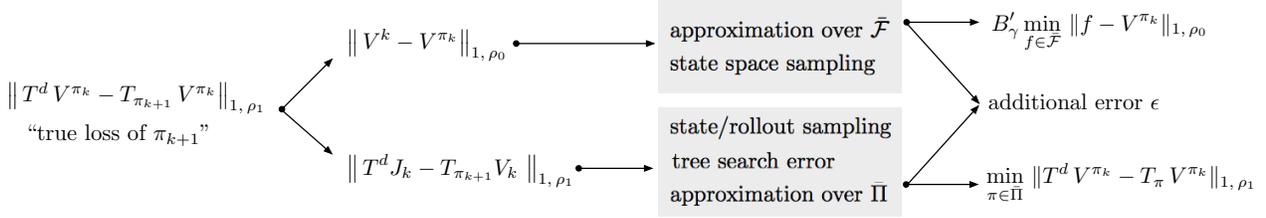}
	\vspace{-5pt}
	\caption{Various Errors Analyzed in Lemma \ref{lem:loss}\vspace{-10pt}}
	\label{fig:distances}
\end{figure*}

From Lemma \ref{lem:one}, we see that the expected suboptimality at iteration $K$ can be upper bounded by the suboptimality of the initial policy $\pi_0$ (in maximum norm) plus a discounted and re-weighted version of $\|\<T^d \< V^{\pi_{k-1}} - T_{\pi_k} \< V^{\pi_{k-1}} \|_{1,\<\rho_1}$ accumulated over prior iterations. Hypothetically, if $(T^d \< V^{\pi_{k-1}})(s) - (T_{\pi_k} \< V^{\pi_{k-1}})(s)$ were small for all iterations $k$ and all states $s$, then the suboptimality of $\pi_K$ converges linearly to zero. Hence, we may refer to $\|\<T^d \< V^{\pi_{k-1}} - T_{\pi_k} \< V^{\pi_{k-1}} \|_{1,\<\rho_1}$ as the ``true loss,'' the target term to be minimized at iteration $k$. We now have a starting point for the analysis: if (\ref{eq:loss}) can be made precise, then the result can be combined with Lemma \ref{lem:one} to provide an explicit bound on $\|V^* - V^{\pi_K}\|_{1,\nu}$. The various errors that we incur when relating the objective of $\mathtt{Classify}$ to the true loss include the error due to regression using functions in $\bar{\mathcal F}$; the error due to sampling the state space according to $\rho_1$; the error of estimating $(T_\pi\<V_k)(s)$ using the sample average of one-step rollouts $\hat{Q}_k(s,\pi(s))$; and of course, the error due to $\mathtt{MCTS}$.

We now give a series of lemmas that help us carry out the analysis. In the algorithmic setting, the policy $\pi_k$ is a random quantity that depends on the samples collected in previous iterations; however, for simplicity, the lemmas that follow are stated from the perspective of a fixed policy $\mu$ or fixed value function approximation $V$ rather than $\pi_k$ or $V_k$. Conditioning arguments will be used when invoking these lemmas (see supplementary material).
\begin{lemma}[Propagation of VFA Error]
\label{lem:Atwo}
Consider a policy $\mu \in \Pi$ and value function $V \in \mathcal F$. Analogous to (\ref{eq:u}), let $J = T_\mu^h \< V$. Then, under Assumption \ref{as:conc}, we have the bounds:
\begin{enumerate}[label=(\alph*)]
\item $\sup_{\pi \in \bar{\Pi}} \| T_{\pi} \< V - T_{\pi} \< V^{\mu}  \|_{1,\<\num} \le  \gamma \< A'_{1} \, \| V-V^\mu \|_{1,\<\nuv}$,
\item $\| T^d \< J -T^d \< V^\mu  \|_{1,\<\num} \le \gamma^{d+h}\< A'_{d+h} \< \| V-V^\mu \|_{1,\<\nuv}$.
\end{enumerate}
\end{lemma}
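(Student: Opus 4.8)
The plan is to treat the two parts separately, reducing each to a change-of-measure argument that invokes the concentrability coefficient $A'_m$. For part (a), I would first observe that the reward terms in $T_\pi V$ and $T_\pi V^\mu$ are identical and cancel, so that pointwise $(T_\pi V)(s) - (T_\pi V^\mu)(s) = \gamma \, (P_\pi (V - V^\mu))(s)$. Using the elementary bound $|P_\pi g| \le P_\pi |g|$ (Jensen, since $P_\pi$ integrates against a probability measure) and then rewriting $\int (P_\pi |V - V^\mu|)(s)\, \num(ds) = \int |V - V^\mu|(\tilde s)\, (\num P_\pi)(d\tilde s)$ by Fubini, I would conclude $\| T_\pi V - T_\pi V^\mu\|_{1,\num} = \gamma \, \| V - V^\mu\|_{1,\,\num P_\pi}$. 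Because $\num P_\pi \ll \nuv$ with Radon--Nikodym derivative bounded in sup-norm by $A'_1$, a single change of measure gives the bound $\gamma A'_1 \|V - V^\mu\|_{1,\nuv}$ uniformly in $\pi$, so the supremum over $\bar{\Pi}$ obeys the same bound.

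For part (b), the argument has two ingredients. First, since $V^\mu$ is the unique fixed point of $T_\mu$, I have $V^\mu = T_\mu^h V^\mu$, and because the difference $T_\mu W - T_\mu W' = \gamma P_\mu (W - W')$ is linear, induction on $h$ yields $J - V^\mu = T_\mu^h V - T_\mu^h V^\mu = \gamma^h P_\mu^h (V - V^\mu)$, hence $|J - V^\mu| \le \gamma^h P_\mu^h |V - V^\mu|$. Second, I need a pointwise contraction estimate for the nonlinear Bellman operator $T$: using $|\max_a f(a) - \max_a g(a)| \le \max_a |f(a) - g(a)|$, the reward cancellation, and $|P_a g| \le P_a|g|$, I get $|TW - TW'| \le \gamma P_{\hat\mu}|W - W'|$ where $\hat\mu(s) \in \argmax_a (P_a|W-W'|)(s)$ is a state-dependent ``worst-case'' policy. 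Iterating this bound $d$ times and using the monotonicity of each $P_\pi$ to propagate the inequality through the composition yields $|T^d W - T^d W'| \le \gamma^d P_{\hat\mu_d} \cdots P_{\hat\mu_1} |W - W'|$ for some sequence $\hat\mu_1, \ldots, \hat\mu_d \in \Pi$.

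Combining the two ingredients with $W = J$ and $W' = V^\mu$ gives the pointwise bound $|T^d J - T^d V^\mu| \le \gamma^{d+h} P_{\hat\mu_d} \cdots P_{\hat\mu_1} P_\mu^h |V - V^\mu|$, a product of exactly $d+h$ transition operators applied to $|V - V^\mu|$. Integrating against $\num$ and applying the same Fubini/change-of-measure step as in part (a) --- now with the $(d+h)$-fold product --- I would bound the resulting measure's density by $A'_{d+h}$ and arrive at $\gamma^{d+h} A'_{d+h} \|V - V^\mu\|_{1,\nuv}$.

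I expect the main obstacle to be the nonlinear step for $T^d$: unlike $T_\mu$, the Bellman operator does not factor linearly, so the clean identity $J - V^\mu = \gamma^h P_\mu^h(V - V^\mu)$ has no exact analogue. The key is to settle for an inequality by extracting a maximizing policy $\hat\mu_i$ at each application of $T$, and then to verify that the positivity and monotonicity of the operators $P_{\hat\mu_i}$ allow the per-step bound to be chained into the full $d$-step product without the constants degrading. Measurability of the $\argmax$ selections is a technical point I would dispatch by noting that $\mathcal A$ is finite, so the pointwise maximizer is a measurable function of $s$.
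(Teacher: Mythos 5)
Your proof is correct and follows essentially the same route the paper takes: linearize each operator difference into a product of transition kernels (picking up one $\gamma$ per step), extract measurable selection policies to handle the nonlinear operator $T$, chain the pointwise bounds via monotonicity of the $P_\pi$'s, and finish with a Fubini/change-of-measure step in which the Radon--Nikodym derivative is bounded by the concentrability coefficients $A'_1$ and $A'_{d+h}$ from Assumption 5. The only blemish is cosmetic: in part (a), the relation $\| T_\pi V - T_\pi V^\mu \|_{1,\rho_1} = \gamma \, \| V - V^\mu \|_{1,\,\rho_1 P_\pi}$ should be stated as an inequality $\le$, since the Jensen step $|P_\pi g| \le P_\pi |g|$ has already been applied.
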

The lemma above addresses the fact that instead of using $V^{\pi_k}$ directly, $\mathtt{Classify}$ and $\mathtt{MCTS}$ only have access to the estimates $V_k$ and $J_k = T^h_{\pi_k} V_k$ ($h$ steps of rollout with an evaluation of $V_k$ at the end), respectively. Note that propagation of the error in $V_k$ is discounted by $\gamma$ or $\gamma^{d+h}$ and since the lemma converts between $\|\cdot\|_{1,\<\rho_1}$ and $\|\cdot\|_{1,\<\rho_0}$, it is also impacted by the concentrability coefficients $A'_1$ and $A'_{d+h}$.

Let $\textnormal{d}_{\bar{\Pi}}$ be the \emph{VC-dimension} of the class of binary classifiers $\bar{\Pi}$ and let $\textnormal{d}_{\bar{\mathcal F}}$ be the \emph{pseudo-dimension} of the function class $\bar{\mathcal F}$. The VC-dimension is a measure of the \emph{capacity} of $\bar{\Pi}$ and the notion of a pseudo-dimension is a generalization of the VC-dimension to real-valued functions (see, e.g., \citet{pollard1990empirical}, \citet{haussler1992decision}, \citet{mohri2012foundations} for definitions of both). Similar to \citet{lazaric2016analysis} and \citet{scherrer2015approximate}, we will present results for the case of two actions, i.e., $|\mathcal A| = 2$. The extension to multiple actions is possible by performing an analysis along the lines of \citet[Section 6]{lazaric2016analysis}. 
We now quantify the error illustrated in Figure \ref{fig:distances}. Define the quantity $B'_\gamma = \gamma \< A'_{1} + \gamma^{d+h}\< A'_{d+h}$, the sum of the coefficients from Lemma \ref{lem:Atwo}.
\begin{lemma}
\label{lem:loss}
Suppose the regression sample size $N_0$ is
\[
\mathcal O \left( (\Vmax B'_\gamma)^2 \, \epsilon^{-2} \, \bigl [ \log(1/\delta) + d_{\bar{\mathcal F}} \< \log(\Vmax \< B'_\gamma/\epsilon)   \bigr] \right)
\]
and the sample size $M_0$, for estimating the regression targets, is
\[
\mathcal O\left( (\Vmax B'_\gamma)^2 \, \epsilon^{-2} \, \bigl [ \log(N_0/\delta) \bigr] \right).
\]
Furthermore, there exist constants $C_1$, $C_2$, $C_3$, and $C_4$, such that if $N_1$ and $L_1$ are large enough to satisfy
\begin{align*}
&N_1 \ge C_1 \< \Vmax^2 \, \epsilon^{-2} \, \bigl[   \log(C_2/\delta) + d_{\bar{\Pi}} \< \log(e \< N_1/d_{\bar{\Pi}} )   \bigr],\\
&L_1 \ge C_1 \< \Vmax^2 \, \epsilon^{-2} \, \bigl[   \log(C_2\<N_1/\delta) + d_{\bar{\Pi}} \< \log(e \< L_1/d_{\bar{\Pi}} )   \bigr],
\end{align*}
and if $M_1 \ge m(C_3\<\epsilon, C_4 \< \delta/N_1)$, then
\begin{align*}
\|T^d\< V^{\pi_k} - T_{\pi_{k+1}} &\< V^{\pi_k} \|_{1,\<\rho_1} \le B_\gamma' \min_{f \in \bar{\mathcal F}} \< \| f-V^{\pi_k} \|_{1,\<\nuv} \\
&+ \min_{\pi \in \bar{\Pi}} \, \|T^d\< V^{\pi_k} - T_\pi \< V^{\pi_k} \|_{1,\<\rho_1}  + \epsilon
\end{align*} 
with probability at least $1-\delta$.
\end{lemma}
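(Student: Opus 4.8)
The plan is to control the true loss $\mathcal L_k(\pi)=\|T^d V^{\pi_k}-T_\pi V^{\pi_k}\|_{1,\rho_1}$ evaluated at the learned $\pi_{k+1}$ by comparing it, through a chain of triangle inequalities, to the empirical objective that $\mathtt{Classify}$ actually minimizes. I would introduce two intermediate quantities: the ``population'' loss built from the \emph{estimated} leaf value and VFA, $\ell_k(\pi)=\|T^d J_k-T_\pi V_k\|_{1,\rho_1}$ (with $J_k=T^h_{\pi_k}V_k$ by Assumption \ref{as:leafeval}), and the empirical loss $\hat\ell_k(\pi)=\tfrac1{N_1}\sum_i|\hat U_k(s^i)-\hat Q_k(s^i,\pi(s^i))|$ from Assumption \ref{as:class}. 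Writing $\pi^+\in\argmin_{\pi\in\bar\Pi}\mathcal L_k(\pi)$ for the best-in-class comparator, the backbone is the decomposition
$$\mathcal L_k(\pi_{k+1})\le \bigl[\mathcal L_k(\pi_{k+1})-\ell_k(\pi_{k+1})\bigr]+\bigl[\ell_k(\pi_{k+1})-\ell_k(\pi^+)\bigr]+\bigl[\ell_k(\pi^+)-\mathcal L_k(\pi^+)\bigr]+\mathcal L_k(\pi^+),$$
where the middle bracket is handled using the optimality $\hat\ell_k(\pi_{k+1})\le\hat\ell_k(\pi^+)$, bounding it by twice the uniform deviation $\sup_{\pi\in\bar\Pi}|\ell_k(\pi)-\hat\ell_k(\pi)|$. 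Thus two separate estimates remain: an \emph{approximation-error} bridge (the first and third brackets, relating $\ell_k$ to $\mathcal L_k$) and a \emph{concentration} bridge (the uniform deviation).

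For the approximation bridge I would apply Lemma \ref{lem:Atwo} with $\mu=\pi_k$ and $V=V_k$. Bounding $\|T^d V^{\pi_k}-T^d J_k\|_{1,\rho_1}$ by part (b) and $\|T_\pi V^{\pi_k}-T_\pi V_k\|_{1,\rho_1}$ (uniformly in $\pi$) by part (a), the triangle inequality gives $|\mathcal L_k(\pi)-\ell_k(\pi)|\le B'_\gamma\,\|V_k-V^{\pi_k}\|_{1,\rho_0}$ for every $\pi$, with $B'_\gamma=\gamma A'_1+\gamma^{d+h}A'_{d+h}$ exactly the constant in the statement; applied on both $\pi_{k+1}$ and the comparator (and absorbing the universal factor from using the bridge twice) this produces the $B'_\gamma\min_f$ term. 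It then remains to convert the realized VFA error $\|V_k-V^{\pi_k}\|_{1,\rho_0}$ into the in-class approximation error $\min_{f\in\bar{\mathcal F}}\|f-V^{\pi_k}\|_{1,\rho_0}$; this is a standard oracle inequality for least-absolute-deviation regression (Assumption \ref{as:reg}), where two noise sources are absorbed into $\epsilon$: the $M_0$-rollout error in the targets $\hat Y_k(s^i)$, controlled by Hoeffding since each discounted return lies in $[0,\Vmax]$, and the generalization gap over $\bar{\mathcal F}$, controlled by its pseudo-dimension $d_{\bar{\mathcal F}}$. Choosing $N_0$ and $M_0$ at the stated $(\Vmax B'_\gamma)^2\epsilon^{-2}$ scale keeps the estimation part at most $\epsilon/B'_\gamma$, so that $B'_\gamma\|V_k-V^{\pi_k}\|_{1,\rho_0}\le B'_\gamma\min_f\|f-V^{\pi_k}\|_{1,\rho_0}+O(\epsilon)$.

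For the concentration bridge I would bound $\sup_{\pi\in\bar\Pi}|\ell_k(\pi)-\hat\ell_k(\pi)|$ by splitting the per-state integrand error into three pieces. First, the $\mathtt{MCTS}$ error $|\hat U_k(s^i)-(T^d J_k)(s^i)|$: Assumption \ref{as:treesearch} with tolerance $C_3\epsilon$ and confidence $C_4\delta/N_1$, together with $M_1\ge m(C_3\epsilon,C_4\delta/N_1)$ and a union bound over the $N_1$ root states, makes this at most $C_3\epsilon$ simultaneously. Second, the one-step rollout error $|\hat Q_k(s^i,a)-(T_a V_k)(s^i)|$: each $\hat Q_k$ is an average of $L_1$ unbiased samples bounded by $\Vmax$, so Hoeffding together with a uniform argument over $\bar\Pi$ (which, since $|\mathcal A|=2$, contributes the $d_{\bar\Pi}\log(eL_1/d_{\bar\Pi})$ term via Sauer's lemma) yields the stated $L_1$ requirement. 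Third, the state-sampling error $|\tfrac1{N_1}\sum_i|\cdot|-\mathbf E_{\rho_1}|\cdot||$: this is a Vapnik--Chervonenkis uniform deviation for the loss class $\{s\mapsto|(T^d J_k)(s)-(T_{\pi(s)}V_k)(s)|:\pi\in\bar\Pi\}$, whose capacity is governed by $d_{\bar\Pi}$, giving the stated $N_1$ requirement. Because $\pi_k$ and $V_k$ are random, each step is performed after conditioning on the history through iteration $k$, so that $J_k$, $V_k$, and $\pi_k$ may be treated as fixed, exactly as Lemma \ref{lem:Atwo} is phrased.

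The main obstacle is the second bridge rather than the first. The delicate point is that the classification objective is a single empirical process indexed by $\pi\in\bar\Pi$ that simultaneously involves the $N_1$ sampled states, the $L_1$ inner rollout samples, and the random leaf evaluator fed into $\mathtt{MCTS}$; decoupling the two sample sizes while keeping every estimate uniform over $\bar\Pi$, and arranging the per-piece failure probabilities (including the $N_1$-fold union bound over the $\mathtt{MCTS}$ guarantee) so that they sum to $\delta$, is where the care is required. Once these uniform-deviation estimates are assembled and combined with the approximation bridge and the ERM optimality of $\pi_{k+1}$, collecting terms yields $\mathcal L_k(\pi_{k+1})\le B'_\gamma\min_{f\in\bar{\mathcal F}}\|f-V^{\pi_k}\|_{1,\rho_0}+\min_{\pi\in\bar\Pi}\mathcal L_k(\pi)+\epsilon$ on the intersection of the high-probability events, which has probability at least $1-\delta$ after rescaling the constants $C_1,\dots,C_4$.
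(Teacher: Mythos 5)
Your proposal follows essentially the same route as the paper's proof: the identical two-term decomposition obtained by adding and subtracting $T^d J_k$ and $T_\pi V_k$, invoking Lemma \ref{lem:Atwo} to get the $B'_\gamma\,\|V_k - V^{\pi_k}\|_{1,\rho_0}$ bridge, then an ERM-comparator argument with uniform deviations over $\bar\Pi$ (VC/Sauer), Hoeffding bounds for the $M_0$ and $L_1$ rollouts, a pseudo-dimension bound for the regression oracle inequality, and a union bound over the $N_1$ root states for the $\mathtt{MCTS}$ guarantee, all under the same conditioning-on-history device the paper describes. The only soft spot is your parenthetical claim that the factor arising from applying the bridge twice (once for $\pi_{k+1}$, once for the comparator $\pi^+$) can be ``absorbed'': since it multiplies the approximation term $\min_{f \in \bar{\mathcal F}}\|f - V^{\pi_k}\|_{1,\rho_0}$, which no sample size can shrink, it would strictly yield $2B'_\gamma$ rather than $B'_\gamma$ --- but the paper's own sketch implicitly incurs the same constant-factor looseness, so this does not distinguish your argument from theirs.
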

\begin{proof}[Sketch of Proof]
By adding and subtracting terms, applying the triangle inequality, and invoking Lemma \ref{lem:Atwo}, we see that:
\begin{align*}
\|T^d\< V^{\pi_k} - T_{\pi_{k+1}} &\< V^{\pi_k} \|_{1,\<\rho_1} \le  B_\gamma' \, \| V_k - V^{\pi_k} \|_{1,\<\rho_0}\\
&+ \|T^d J_k - T_{\pi_{k+1}} V_k \|_{1, \< \rho_1},\label{thm:eqmain}
\end{align*}
Here, the error is split into two terms. The first depends on the sample $\mathcal S_{0,k}$ and the history through $\pi_k$ while the second term depends on the sample $\mathcal S_{1,k}$ and the history through $V_k$. We can thus view $\pi_k$ as fixed when analyzing the first term and $V_k$ as fixed when analyzing the second term (details in the supplementary material). The first term $\| V_k - V^{\pi_k} \|_{1,\<\rho_0}$ contributes the quantity $\min_{f \in \bar{\mathcal F}} \< \| f-V^{\pi_k} \|_{1,\<\nuv}$ in the final bound with additional estimation error contained within $\epsilon$. The second term $ \|T^d J_k - T_{\pi_{k+1}} V_k \|_{1, \< \rho_1}$ contributes the rest. See Figure \ref{fig:distances} for an illustration of the main proof steps.
\end{proof}
The first two terms on the right-hand-side are related to the approximation power of $\bar{\mathcal F}$ and $\bar{\Pi}$ and can be considered unavoidable. We upper-bound these terms by maximizing over $\bar{\Pi}$, in effect removing the dependence on the random process $\pi_k$ in the analysis of the next theorem. We define:
\begin{align*}
&\mathbb D_0(\bar{\Pi},\bar{\mathcal F}) = \max_{\pi \in \bar{\Pi}} \,\min_{f \in \bar{\mathcal F}} \< \| f-V^{\pi} \|_{1,\<\nuv},\\
&\mathbb D_1^d(\bar{\Pi}) = \max_{\pi \in \bar{\Pi}} \,\min_{\pi' \in \bar{\Pi}}\, \|T^d\< V^{\pi} - T_{\pi'} \< V^{\pi} \|_{1,\<\rho_1},  
\end{align*}
two terms that are closely related to the notion of \emph{inherent Bellman error} \citep{antos2008learning,munos2008finite,lazaric2016analysis,scherrer2015approximate,haskell2017empirical}. 
Also, let $B_\gamma = \sum_{i,j=0}^\infty \gamma^{i+j} \< A_{i+j}
$, which was assumed to be finite in Assumption \ref{as:conc}.
\begin{theorem}
Suppose the sample size requirements of Lemma \ref{lem:loss} are satisfied with $\epsilon/B_\gamma$ and $\delta/K$ replacing $\epsilon$ and $\delta$, respectively. Then, the suboptimality of the policy $\pi_K$ can be bounded as follows:
\begin{align*}
\|V^* - V^{\pi_K} \|_{1,\<\nu} \le \,  &B_\gamma\< [ B_\gamma'\, \mathbb D_0(\bar{\Pi},\bar{\mathcal F}) + \mathbb D_1^d(\bar{\Pi}) ]\\
&+ \gamma^{Kd} \, \|V^* - V^{\pi_0}\|_{\infty} + \epsilon,
\end{align*}
with probability at least $1- \delta$.
\label{thm:one}
\end{theorem}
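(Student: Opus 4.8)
The plan is to chain the per-iteration guarantee of Lemma~\ref{lem:loss} into the telescoping bound of Lemma~\ref{lem:one}, using the concentrability coefficients of Assumption~\ref{as:conc} to reconcile the mismatch between the sampling distribution $\rho_1$ in Lemma~\ref{lem:loss} and the reweighted distributions $\Lambda_{\nu,k}$ appearing in Lemma~\ref{lem:one}. First I would invoke Lemma~\ref{lem:one} to obtain
\[
\|V^* - V^{\pi_K}\|_{1,\<\nu} \le \gamma^{Kd}\|V^* - V^{\pi_0}\|_\infty + \sum_{k=1}^K \gamma^{(K-k)d}\, \| T^d V^{\pi_{k-1}} - T_{\pi_k} V^{\pi_{k-1}} \|_{1,\<\Lambda_{\nu,k}},
\]
which already isolates the desired $\gamma^{Kd}\|V^*-V^{\pi_0}\|_\infty$ term; the remaining sum must be absorbed into $B_\gamma[B_\gamma' \mathbb{D}_0(\bar{\Pi},\bar{\mathcal F}) + \mathbb{D}_1^d(\bar{\Pi})] + \epsilon$.

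The key algebraic step is to convert each $\|\cdot\|_{1,\<\Lambda_{\nu,k}}$ back into $\|\cdot\|_{1,\<\rho_1}$. Expanding the resolvent $[I - \gamma P_{\pi_k}]^{-1} = \sum_{j=0}^\infty \gamma^j (P_{\pi_k})^j$ reveals that $\Lambda_{\nu,k}$ is a $\gamma^j$-weighted mixture of distributions of the form $\nu (P_{\pi^*})^{(K-k)d}(P_{\pi_k})^j$, each obtained from $\nu$ by $(K-k)d+j$ successive policy steps. Assumption~\ref{as:conc} bounds the Radon--Nikodym derivative of each such distribution with respect to $\rho_1$ by $A_{(K-k)d+j}$; because $A_m$ is defined as a supremum over all policy sequences, this holds surely even though $\pi_k$ is random. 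Writing $g_k = T^d V^{\pi_{k-1}} - T_{\pi_k}V^{\pi_{k-1}}$, this yields
\[
\| g_k \|_{1,\<\Lambda_{\nu,k}} \le \Bigl(\sum_{j=0}^\infty \gamma^j A_{(K-k)d+j}\Bigr)\,\| g_k \|_{1,\<\rho_1}.
\]
Multiplying by $\gamma^{(K-k)d}$ and summing over $k$, the total coefficient is $\sum_{k=1}^K \sum_{j=0}^\infty \gamma^{(K-k)d+j}A_{(K-k)d+j}$; since the exponents $(K-k)d$ range over distinct nonnegative integers and all terms are nonnegative, this is dominated by the full double sum $\sum_{i,j=0}^\infty \gamma^{i+j}A_{i+j}=B_\gamma$. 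This is the accounting step that produces exactly the prefactor $B_\gamma$.

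Next I would apply Lemma~\ref{lem:loss} to each $\|g_k\|_{1,\<\rho_1}$. Because the theorem runs that lemma with $\epsilon/B_\gamma$ and $\delta/K$ in place of $\epsilon$ and $\delta$, each iteration gives $\|g_k\|_{1,\<\rho_1} \le B_\gamma'\min_{f\in\bar{\mathcal F}}\|f-V^{\pi_{k-1}}\|_{1,\<\rho_0} + \min_{\pi\in\bar{\Pi}}\|T^d V^{\pi_{k-1}} - T_\pi V^{\pi_{k-1}}\|_{1,\<\rho_1} + \epsilon/B_\gamma$. Upper bounding the two minimization terms by their maxima over $\pi\in\bar{\Pi}$ replaces them with the policy-independent inherent-error quantities $\mathbb{D}_0(\bar{\Pi},\bar{\mathcal F})$ and $\mathbb{D}_1^d(\bar{\Pi})$, removing the dependence on the random policy $\pi_{k-1}$. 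Combining this with the coefficient sum bounded by $B_\gamma$, the residual term $B_\gamma\cdot(\epsilon/B_\gamma)$ collapses to exactly $\epsilon$, producing the stated bound.

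The two points demanding genuine care are probabilistic. First, Lemma~\ref{lem:loss} is stated for a \emph{fixed} policy $\mu$ and value function $V$, whereas here $\pi_{k-1}$ and $V_{k-1}$ are random and depend on the full sample history; the remedy, as flagged in the sketch of Lemma~\ref{lem:loss}, is to condition on the history so that $\pi_{k-1}$ (respectively $V_{k-1}$) can be treated as fixed when controlling the classification (respectively regression) error, then integrate out. Second, each per-iteration bound holds only with probability $1-\delta/K$, so I would take a union bound over the $K$ iterations to ensure all of them hold simultaneously with probability at least $1-\delta$. I expect this union-bound-and-conditioning bookkeeping to be the main obstacle, since the concentrability conversion and the collapse of the geometric double sum into $B_\gamma$ are essentially deterministic and routine once the setup is in place.
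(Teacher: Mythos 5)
Your proposal is correct and follows essentially the same route as the paper's proof: Lemma \ref{lem:one} for the telescoping decomposition, the resolvent expansion of $\Lambda_{\nu,k}$ combined with Assumption \ref{as:conc} so that the accumulated coefficients are dominated by the double sum $B_\gamma$, Lemma \ref{lem:loss} invoked per iteration with $\epsilon/B_\gamma$ and $\delta/K$ followed by maximization over $\bar{\Pi}$ to obtain $\mathbb D_0(\bar{\Pi},\bar{\mathcal F})$ and $\mathbb D_1^d(\bar{\Pi})$, and a union bound over the $K$ iterations. The probabilistic bookkeeping you flag (conditioning on the history so the iterates can be treated as fixed, plus the union bound) is exactly how the paper handles the randomness of $\pi_k$ and $V_k$.
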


\noindent \textbf{Search Depth.} How should the search depth $d$ be chosen? Theorem \ref{thm:one} shows that as $d$ increases, fewer iterations $K$ are needed to achieve a given accuracy; however, the effort required of tree search (i.e., the function $m(\epsilon, \delta)$) grows exponentially in $d$. At the other extreme ($d=1$), more iterations $K$ are needed and the ``fixed cost'' of each iteration of the algorithm (i.e., sampling, regression, and classification --- all of the steps that do not depend on $d$) becomes more prominent. For a given problem and algorithm parameters, these computational costs can each be estimated and Theorem \ref{thm:one} can serve as a guide to selecting an optimal $d$.

\section{Case Study: King of Glory MOBA AI}
We implemented \textit{Feedback-Based Tree Search} within a new and challenging environment, the recently popular MOBA game \textit{King of Glory} by Tencent (the game is also known as \textit{Honor of Kings} and a North American release of the game is titled \textit{Arena of Valor}). Our implementation of the algorithm is one of the first attempts to design an AI for the 1v1 version of this game.

\begin{figure}[t!]
\centering
	\includegraphics[width=0.45\textwidth]{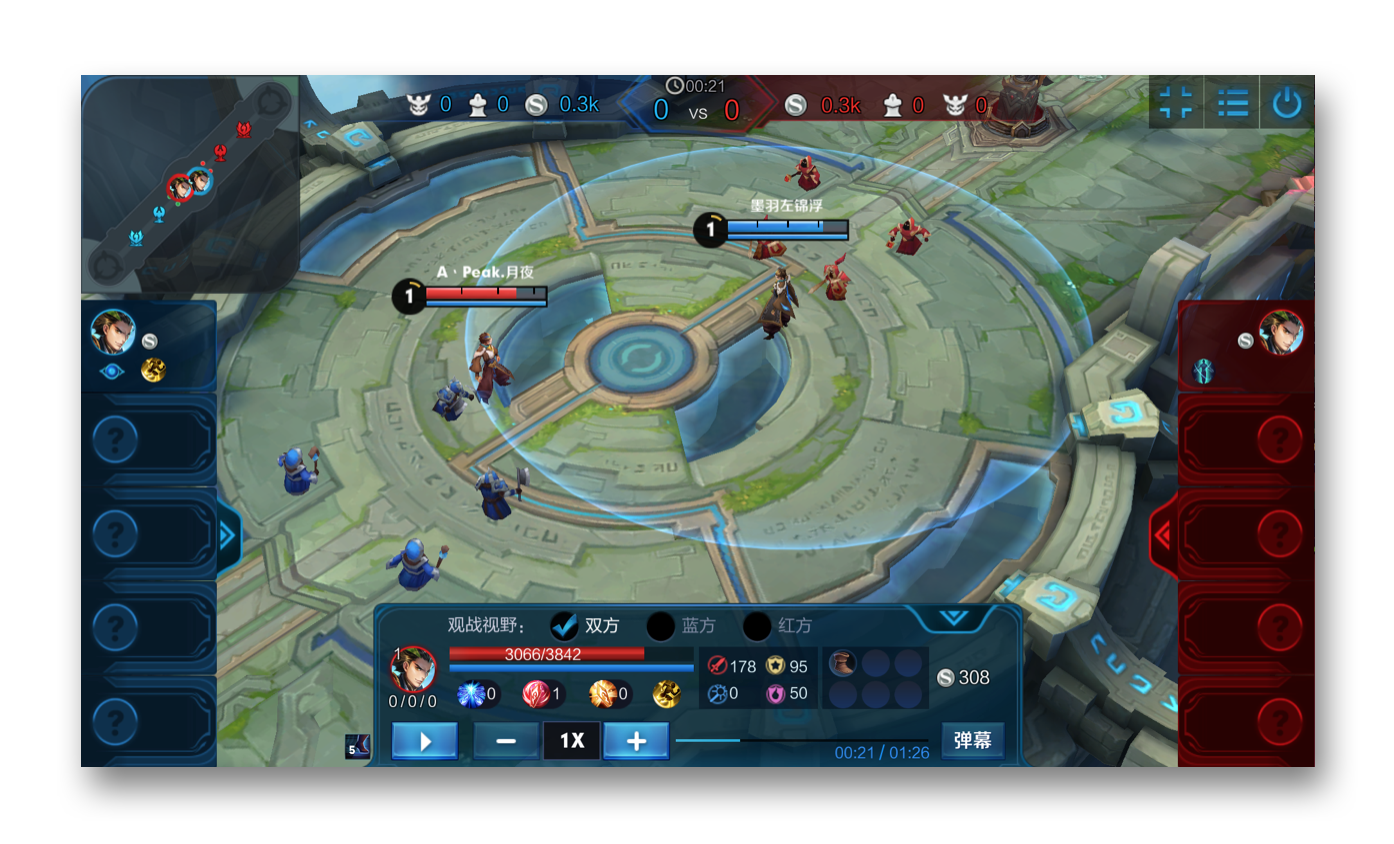}
		\vspace{-10pt}

	\caption{Screenshot from 1v1 \textit{King of Glory}	\vspace{-10pt}}
	\label{fig:screen}
\end{figure}

\begin{figure*}[b!]
	\makebox[\textwidth][c]{\includegraphics[scale=0.7]{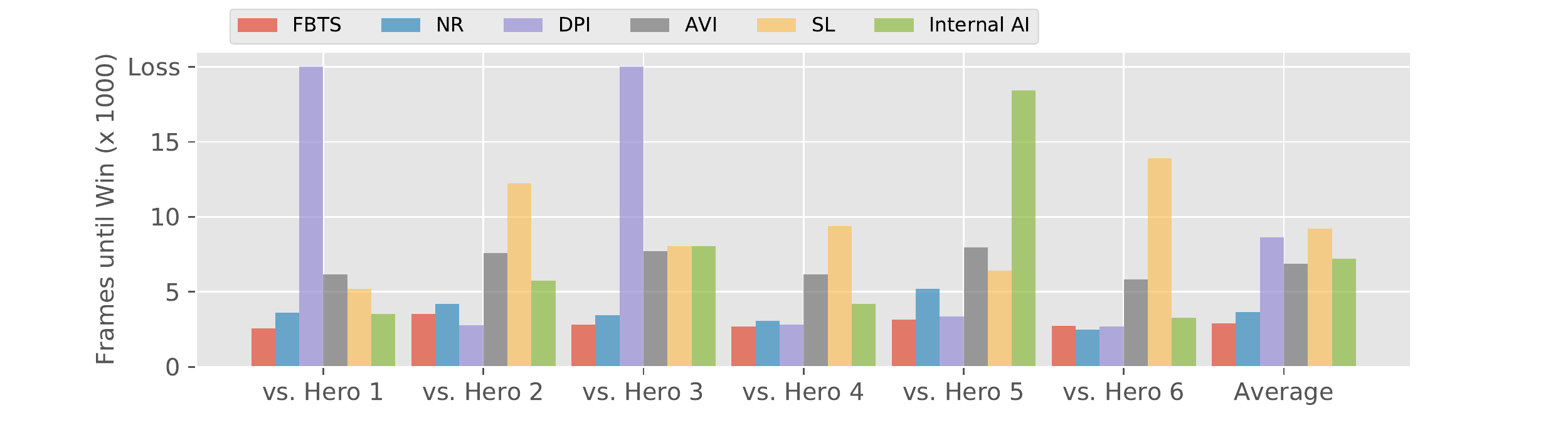}}
	\vspace{-22pt}
	\caption{Number of Frames to Defeat Marksman Heroes\vspace{-10pt}}
	\label{fig:frames}
\end{figure*}

\textbf{Game Description.} In the \emph{King of Glory}, players are divided into two opposing teams and each team has a base located on the opposite corners of the game map (similar to other MOBA games, like \textit{League of Legends} or \textit{Dota 2}). The bases are guarded by towers, which can attack the enemies when they are within a certain attack range. The goal of each team is to overcome the towers and eventually destroy the opposing team's ``crystal,'' located at the enemy's base.
For this paper, we only consider the 1v1 mode, where each player controls a primary ``hero'' alongside less powerful game-controlled characters called ``minions.'' These units guard the path to the crystal and will automatically fire (weak) attacks at enemies within range. 
Figure \ref{fig:screen} shows the two heroes and their minions; the upper-left corner shows the map, with the blue and red markers pinpointing the towers and crystals.

\textbf{Experimental Setup.} 
The state variable of the system is taken to be a 41-dimensional vector containing information obtained directly from the game engine, including \textit{hero locations}, \textit{hero health}, \textit{minion health}, \textit{hero skill states}, and \textit{relative locations to various structures}. There are 22 actions, including move, attack, heal, and special skill actions, some of which are associated with (discretized) directions. The reward function is designed to mimic \textit{reward shaping} \citep{ng1999policy} and uses a combination of signals including \textit{health}, \textit{kills}, \textit{damage dealt}, and \textit{proximity to crystal}. We trained five King of Glory agents, using the hero \textit{DiRenJie}:
\begin{enumerate}
\item The ``FBTS'' agent is trained using our feedback-based tree search algorithm for $K=7$ iterations of 50 games each. The search depth is $d =7$ and rollout length is $h=5$. Each call to $\mathtt{MCTS}$ ran for 400 iterations.
\item The second agent is labeled ``NR'' for \emph{no rollouts}. It uses the same parameters as the FBTS agent except no rollouts are used. At a high level, this bears some similarity to the AlphaGo Zero algorithm \citep{silver2017mastering} in a batch setting.
\item The ``DPI'' agent uses the \emph{direct policy iteration} technique of \cite{lazaric2016analysis} for $K=10$ iterations. There is no value function and no tree search (due to computational limitations, more iterations are possible when tree search is not used).
\item We then have the ``AVI'' agent, which implements \emph{approximate value iteration} \citep{de2000existence,van2006performance,munos2007performance,munos2008finite} for $K=10$ iterations. This algorithm can be considered a batch version of DQN \citep{Mnih2013}.
\item Lastly, we consider an ``SL'' agent trained via \emph{supervised learning} on a dataset of approximately 100{,}000 state/action pairs of human gameplay data. Notably, the policy architecture used here is consistent with the previous agents.
\end{enumerate}

In fact, both the policy and value function approximations are consistent across all agents; they use fully-connected neural networks with five and two hidden layers, respectively, and SELU (scaled exponential linear unit) activation \citep{klambauer2017self}. The initial policy $\pi_0$ takes random actions: move (w.p. 0.5), directional attack (w.p. 0.2), or a special skill (w.p. 0.3). Besides biasing the move direction toward the forward direction, no other heuristic information is used by $\pi_0$. $\mathtt{MCTS}$ was chosen to be a variant of UCT \citep{Kocsis2006} that is more amenable toward parallel simulations: instead of using the argmax of the UCB scores, we sample actions according to the distribution obtained by applying softmax to the UCB scores. 

In the practical implementation of the algorithm, $\mathtt{Regress}$ uses a cosine proximity loss while $\mathtt{Classify}$ uses a negative log-likelihood loss, differing from the theoretical specifications. Due to the inability to ``rewind'' or ``fast-forward'' the game environment to arbitrary states, the sampling distribution $\rho_0$ is implemented by first taking random actions (for a random number of steps) to arrive at an initial state and then following $\pi_k$ until the end of the game. To reduce correlation during value approximation, we discard $2/3$ of the states encountered in these trajectories. For $\rho_1$, we follow the $\mathtt{MCTS}$ policy while occasionally injecting noise (in the form of random actions and random switches to the default policy) to reduce correlation. During rollouts, we use the internal AI for the hero \textit{DiRenJie} as the opponent.

\textbf{Results.} As the game is nearly deterministic, our primary methodology for testing to compare the agents' effectiveness against a common set of opponents chosen from the internal AIs. We also added the internal DiRenJie AI as a ``sanity check'' baseline agent. To select the test opponents, we played the internal DiRenJie AI against other internal AIs (i.e., other heroes) and selected six heroes of the \textit{marksman} type that the internal DiRenJie AI is able to defeat. Each of our agents, including the internal DiRenJie AI, was then played against every test opponent. Figure \ref{fig:frames} shows the length of time, measured in frames, for each agent to defeat the test opponents (a value of 20{,}000 frames is assigned if the opponent won). Against the set of common opponents, FBTS significantly outperforms DPI, AVI, SL, and the internal AI. However, FBTS only slightly outperforms NR on average (which is perhaps not surprising as NR is the only other agent that also uses MCTS). Our second set of results help to visualize head-to-head battles played between FBTS and the four baselines (all of which are won by FBTS): Figure \ref{fig:ratios} shows the ratio of the FBTS agent's gold to its opponent's gold as a function of time. Gold is collected throughout the game as heroes deal damage and defeat enemies, so a ratio above 1.0 (above the red region) indicates good relative performance by FBTS. As the figure shows, each game ends with FBTS achieving a gold ratio in the range of $[1.25, 1.75]$.

\begin{figure}
	\makebox[0.49\textwidth][c]{\includegraphics[scale=0.7]{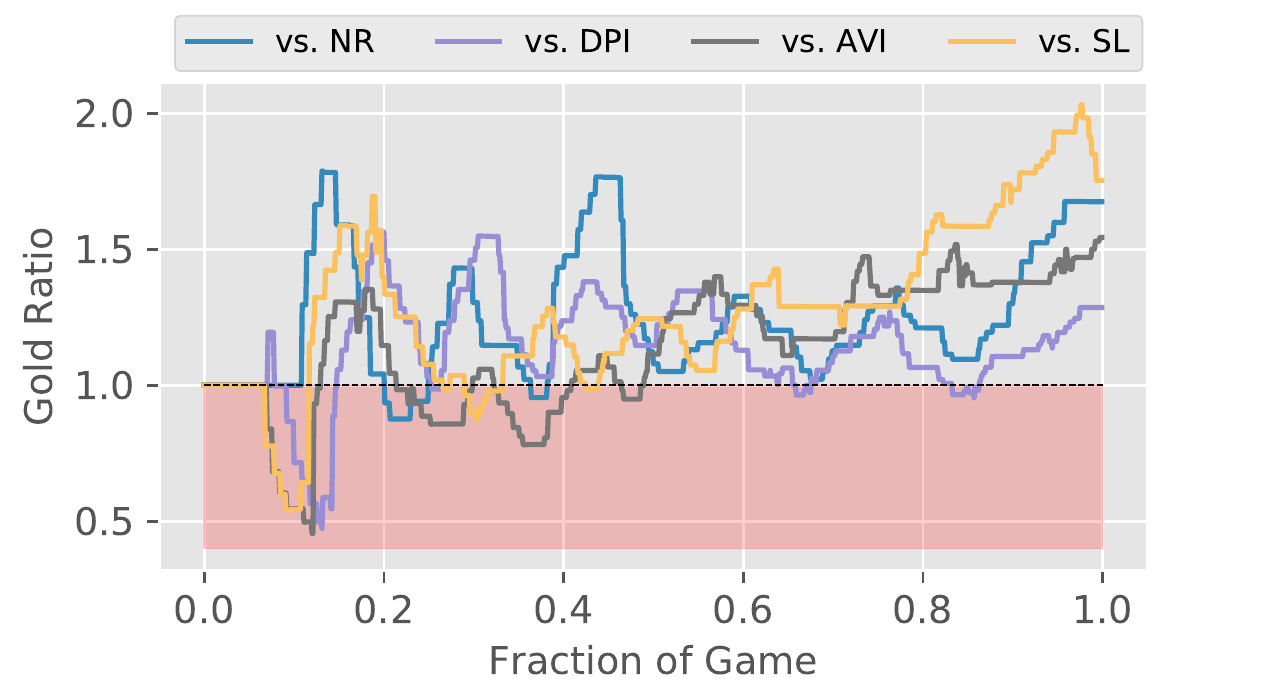}}
	\vspace{-20pt}
	\caption{In-game Behavior\vspace{-10pt}}
	\label{fig:ratios}
\end{figure}

\section{Conclusion \& Future Work}
In this paper, we provide a sample complexity analysis for feedback-based tree search, an RL algorithm based on repeatedly solving finite-horizon subproblems using MCTS. Our primary methodological avenues for future work are (1) to analyze a self-play variant of the algorithm and (2) to consider related techniques in multi-agent domains (see, e.g., \citet{hu2003nash}). The implementation of the algorithm in the 1v1 MOBA game \textit{King of Glory} provided us encouraging results against several related algorithms; however, significant work remains for the agent to become competitive with humans.

\FloatBarrier
\clearpage
\section*{Acknowledgements}
We sincerely appreciate the helpful feedback from four anonymous reviewers, which helped to significantly improve the paper. We also wish to thank our colleagues at Tencent AI Lab, particularly Carson Eisenach and Xiangru Lian, for assistance with the test environment and for providing the SL agent. The first author is very grateful for the support from Tencent AI Lab through a faculty award.

\bibliography{feedback}
\bibliographystyle{icml2018}

\includepdf[pages=-, pagecommand={}]{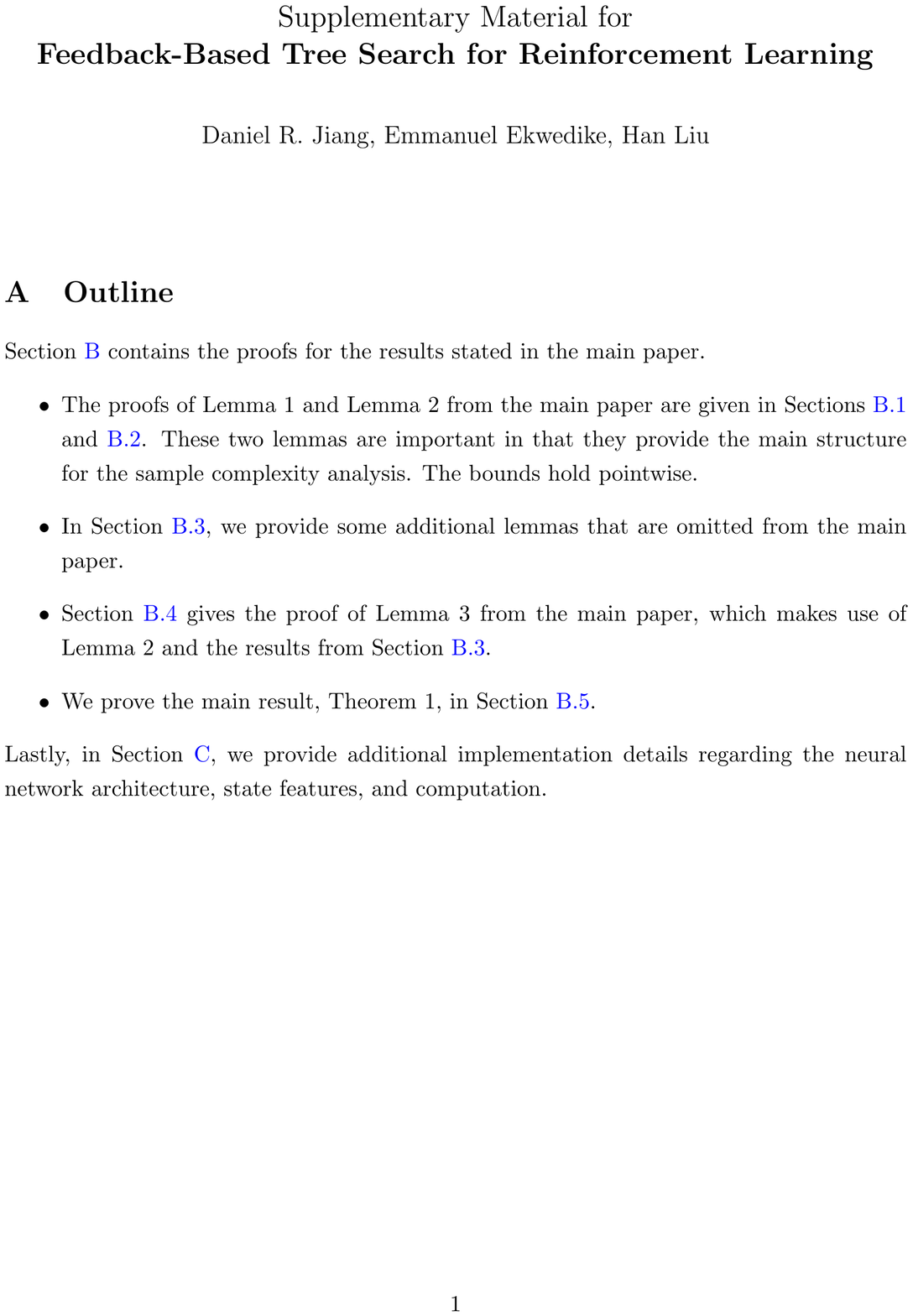}

\end{document}